\newcommand{\onemax}{\textsc{OneMax}\xspace}
\newcommand{\LO}{\textsc{Leading\-Ones}\xspace}
\newcommand{\leadingones}{\LO}
\newcommand{\jump}{\textsc{Jump}\xspace}
\DeclareMathOperator{\Sample}{Sample}
\newcommand{\R}{\ensuremath{\mathbb{R}}}
\newcommand{\N}{\ensuremath{\mathbb{N}}} 
\newcommand{\Z}{\ensuremath{\mathbb{Z}}}
\newcommand{\Var}{\mathrm{Var}\xspace} 
\newcommand{\assign}{\leftarrow}
\begin{document}
{\sloppy%
\title{General Univariate Estimation-of-Distribution Algorithms}
%
%
\author{Benjamin Doerr\inst{1}
\and
Marc Dufay\inst{2}}
\authorrunning{B. Doerr, M. Dufay}
%
\institute{LIX, CNRS, \'Ecole Polytechnique, Institut Polytechnique de Paris, France \and \'Ecole Polytechnique, Institut Polytechnique de Paris, France}
\maketitle              
\begin{abstract}
We propose a general formulation of a univariate estimation-of-distribution algorithm (EDA). It naturally incorporates the three classic univariate EDAs \emph{compact genetic algorithm}, \emph{univariate marginal distribution algorithm} and \emph{population-based incremental learning} as well as the \emph{max-min ant system} with iteration-best update. Our unified description of the existing algorithms allows a unified analysis of these; we demonstrate this by providing an analysis of genetic drift that immediately gives the existing results proven separately for the four algorithms named above. Our general model also includes EDAs that are more efficient than the existing ones and these may not be difficult to find as we demonstrate for the \onemax and \leadingones benchmarks.

\keywords{Estimation of distribution algorithms \and genetic drift \and running time analysis \and theory.}
\end{abstract}
\section{Introduction}
Estimation-of-distribution algorithms (EDAs) are a class of iterated randomized search heuristics proposed first in the 1990s~\cite{JuelsBS93}. Different from genetic algorithms (GAs), which evolve a set $P$ (``population'') of good solutions for a given problem, EDAs evolve a probability distribution (``probabilistic model'') on the set of possible solutions, hopefully in the way that good solutions have a higher probability assigned to them. Since it is clear that a set $P$ of solutions can be represented by a probability distribution (namely the uniform distribution on~$P$), EDAs (with an appropriate probabilistic model) have a much richer way of transporting information from one iteration to the next than genetic algorithms. 

Several results show that this theoretical advantage can be turned into a true advantage when running the EDA in the right way. For example, it was shown that the more cautious way of updating the probabilistic model of EDAs (as opposed to the only alternatives of a GA, which are to accept or discard a solution) can lead to a high robustness to noise~\cite{FriedrichKKS16,FriedrichKKS17}. The fact that EDAs can sample with a larger variance was shown to be advantageous for leaving local optima~\cite{HasenohrlS18,Doerr21cgajump,DoerrK21tcs,Witt21}. In~\cite{DoerrK20gecco}, it was demonstrated that the probabilistic model developed by an EDA allows to obtain much more diverse good solutions than what can be achieved by population-based algorithms.

Due to their higher simplicity, the most studied form of EDAs are \emph{univariate} ones, which sample the variables of each solution independently. When restricting ourselves to pseudo-Boolean optimization, that is, the solutions are bit-strings of length~$n$, then this means that the probabilistic model can be described by a \emph{frequency vector} $p = (p_1, \dots, p_n) \in [0,1]^n$ such that a sample $x \in \{0,1\}^n$ from this model satisfies 
\begin{equation}\label{eq:sample}
  \Pr[x_i = 1] = p_i \mbox{ independently for all $i \in [1..n] := \{1, \dots, n\}$.}
\end{equation}
The three classic univariate EDAs are \emph{population-based incremental learning (PBIL)}~\cite{Baluja94}, the \emph{univariate marginal distribution algorithm (UMDA)}~\cite{MuhlenbeinP96}, and the \emph{compact genetic algorithm (cGA)}~\cite{HarikLG99}. As observed in~\cite{KrejcaW20bookchapter}, the \emph{max-min ant system (MMAS)}~\cite{StutzleH00} with iteration-best pheromone update also is a univariate EDA (when used for pseudo-Boolean optimization). We note that the UMDA and this MMAS are special cases of PBIL. Unfortunately, with very few results existing for the PBIL, this connection so far could not be exploited extensively.

So far, these four algorithms have mostly been discussed separately, and for many aspects, only one or two of the four algorithms have been regarded. For example, there are only two mathematical analysis on how EDAs cope with Gaussian noise and these regards only the cGA~\cite{FriedrichKKS17} and the MMAS~\cite{FriedrichKKS16}. For the question how EDAs cope with local optima, the existing runtime analyses only regard the cGA~\cite{HasenohrlS18,Doerr21cgajump,Witt21} and the MMAS~\cite{BenbakiBD21}. This leaves many questions unanswered. 

We also note that many arguments used in the past were specific to the particular algorithm regarded. For example, the analyses in~\cite{HasenohrlS18,Doerr21cgajump} exploit that the cGA enjoys the property that if the sample with better fitness is closer to the optimum, then the model update will reduce the expected distance of the samples from the optimum. The MMAS does not have this property and consequently, a different proof approach was necessary in~\cite{BenbakiBD21}. 

\textbf{Our results:} In this work, we try to improve this situation by proposing a simple, yet general class of EDAs that includes the four algorithms mentioned above. Our hope is that by thus distilling the common features of these algorithms, it becomes easier to find analyses that apply simultaneously to all four algorithms. We demonstrate that this is indeed possible by proving a quantitative statement on the genetic drift effect in our EDA class. This result contains as special cases the results (separately) proven in~\cite{DoerrZ20tec}.

Our second hope is that the large class of EDAs defined by our model also contains algorithms with better performance than the four known algorithms. With elementary non-rigorous arguments, we design such an EDA and show via an experimental analysis that it is at least twice as fast at the cGA and UMDA with optimized parameters on the \onemax benchmark. We note that this new algorithm is in no way more complicated than the known special cases of our general model -- it just profits from wider ranges of allowed parameters.

\section{Previous Work}

For reasons of space and since several good surveys and textbooks are available, we describe here only the works that are really close to ours. For a general introduction to EDAs and details on applications, we refer to the surveys~\cite{HauschildP11,LarranagaL02,PelikanHL15}. 

Our work, while not purely mathematical, nevertheless is regarding EDAs more from a theoretical perspective. A very recent survey on the state of the art of the theory of EDAs is~\cite{KrejcaW20bookchapter}, broader introductions to theoretical approaches in evolutionary computation include~\cite{NeumannW10,AugerD11,Jansen13,DoerrN20}. As can easily be deduced from this survey, the theoretical understanding of EDAs is far from complete and for many basic questions, e.g., the runtime on the simple \onemax benchmark, a complete answer is still missing. What can also be observed from this survey is that essentially all previous works regard only a single univariate EDA. There are few exceptions, e.g., in~\cite{SudholtW19} both the cGA and the MMAS is analyzed, but also in these cases the results for different algorithms are proven separately. 

The only previous work we are aware of that undertakes an attempt towards a unified treatment of univariate EDAs is~\cite{FriedrichKK16}. There, the framework of an $n$-Bernoulli-$\lambda$-EDA is defined. This framework is very general and includes not only our EDA model, but in fact all univariate EDAs which sample a fixed number $\lambda$ of offspring according to~\eqref{eq:sample} and then update the probabilistic model $p$ via any deterministic function $\phi$ that takes as arguments the current model and the offspring together with their fitness. Not surprisingly, in such an extremely general model it is hard to prove meaningful results, and consequently, the particular results in~\cite{FriedrichKK16} need non-trivial additional assumptions: To show that a stable EDA is not balanced, in particular the additional assumption is made that whenever the EDA optimizes a function with neutral $i$-th bit, then at all times $t$ the sampling frequency $p_i(t)$ satisfies $\Var[p_i(t+1) \mid p_i(t)] = -a p_i(t)^2 + b p_i(t) +c$ for suitable $a,b,c \in \R$ with $0 < a < 1$, see~\cite[Theorem~10]{FriedrichKK16} (this notion has been relaxed to the requirement that $\inf\{\Var[p_i(t+1) + \mathbf{1}[p_i(t) \notin [d,1-d]] \mid p_i(t)] \mid t \in \N\}> 0$ for some $d = o(1)$ in~\cite[Theorem~6.11]{Krejca19}). Similarly, the runtime analysis on the \leadingones benchmark relies on two specific assumptions how the frequencies behave during the optimization process~\cite[Theorem~12]{FriedrichKK16}. There is no doubt that also with these restrictions, the results in~\cite{FriedrichKK16} are strong and impressive, but the need for the restrictions suggests that the $n$-Bernoulli-$\lambda$-EDA model is too general to admit strong results covering the whole model (and this is where we hope that our more narrow model is more effective).

There have also been some attempts to encompass EDAs in a model even wider. One of them is by defining these algorithms as \textit{model-based search} algorithms which rely on a parameterized probabilistic model as opposed to \textit{instance-based search} algorithms which rely on a population of solutions \cite{ZlochinBMD04}. A model-based search algorithm is described by its probabilistic model and the way it updates its model and some parallels can be made between univariate EDAs and gradient-based methods. Another approach described in \cite{OllivierAAH17} is by turning existing EDAs into a continuous-time black-box optimization method using the \textit{information-geometric optimization} (IGO) method which can then be turned back into algorithms using time discretization. Existing univariate algorithms like cGA or PBIL can be retrieved using this method. However, these approaches result in a model that is too general to obtain running time results or to obtain ideas how to set the parameter of the algorithms.

\section{Univariate EDA: Classic and New}

In this section, we first describe brief{}ly the four existing algorithms mentioned in the introduction and then derive from these a general model encompassing all four. We shall write $x \sim \Sample(p)$ to denote that $x \in \{0,1\}^n$ is sampled according to the univariate model described by the frequency vector $p \in [0,1]^n$, that is, that $x$ satisfies~\eqref{eq:sample}. We assume that each call of this sampling procedure is stochastically independent from all other samplings and possibly other random decisions of the algorithm. When an algorithm optimizing a function $f$ samples $\lambda$ individuals, we denote these by $x[1], \dots, x[\lambda]$ and we denote by $\tilde x[1], \dots, \tilde x[\lambda]$ the sorting of these by decreasing (worsening) fitness~$f$, with ties broken randomly. All algorithms initialize the univariate model as $p = (\frac 12, \dots, \frac 12)$, which gives the uniform distribution on the search space $\{0,1\}^n$. In their main loop, all sample a certain number of solutions und update the model based on the fitness of the solutions. We first describe all algorithms in the basic version without artificial frequency margins, then propose our general EDA model (also without frequency margins), and finally  discuss how to include such margins.

The \emph{compact genetic algorithm (cGA)}~\cite{HarikLG99} samples only two solutions and modifies the frequency vector by a scalar multiple of the difference between the better and the worse solution, that is, $p \assign p + \frac1K (\tilde{x}[1]-\tilde{x}[2])$. Here $K$ is the only algorithm parameter called \emph{hypothetical population size}. In other words, a frequency $p_i$ does not change if the two samples agree in the $i$-th bit, and it moves by an additive term of $\frac 1K$ towards the bit value of the better solution otherwise. Usually, $K$ is taken as an even integer since this automatically keeps the frequencies in the range $[0,1]$. For other values of $K$, one would need to cap the frequencies after the update into the interval $[0,1]$.

The \emph{univariate marginal distribution algorithm (UMDA)}~\cite{MuhlenbeinP96} with parameters $\lambda, \mu \in \Z_{\ge 1}$ samples $\lambda$ solutions and updates the model to the average of the $\mu$ best solutions, that is, $p \assign \frac 1\mu \sum_{i=1}^\mu \tilde x[i]$.

The \emph{max-min ant system (MMAS)}~\cite{StutzleH00} with iteration-best update besides the sample size $\lambda$ has the \emph{learning rate} $\rho \in ]0,1]$ (\emph{pheromone evaporation rate} in the ant colony optimization language) as second parameter. Only the best offspring is used for the model update and it enters the model with weight $\rho$, that is, the model update is $p \assign (1-\rho) p + \rho \tilde x[1]$. 

\emph{Population-based incremental learning (PBIL)}~\cite{Baluja94} selects $\mu$ out of $\lambda$ solutions and combines their average weighted by $\rho$ with the current model: $p \assign (1-\rho) p + \rho \frac 1\mu \sum_{i=1}^\mu \tilde x[i]$. Consequently, PBIL has as special cases both the UMDA (by taking $\rho=1$) and the MMAS (by taking $\mu=1$).  

The pseudocodes for these four algorithms are given in Algorithms~\ref{alg:cga} to~\ref{alg:pbil}. As can easily be seen, in all four cases the new model is a linear combination of the samples and the old model. This suggests the following \emph{general univariate EDA model}. Let $\lambda \in \Z_{\ge 1}$ the sample size and $\gamma_0, \gamma_1, \dots, \gamma_\lambda \in \R$ such that $\sum_{i=0}^\lambda \gamma_i = 1$. The general univariate EDA in its main loop samples $\lambda$ solutions and updates the frequency vector to $p \assign \gamma_0 p + \sum_{i=1}^\lambda \gamma_i \tilde x[i]$, where this is to be understood that frequencies below zero or above one are replaced by zero or one. The complete pseudocode is given in Algorithm~\ref{alg:psm}.

\begin{algorithm2e}[ht]%
	$p(0) = \left(\frac{1}{2}, \dots, \frac{1}{2}\right) \in [0,1]^n$ \\
	\For{$t = 1,2, \ldots$}{
	$x[1] \sim \Sample(p(t-1))$  \\
	$x[2] \sim \Sample(p(t-1))$ \\
	\If{$f(x[1]) \geq f(x[2])$} {
		$p(t) = p(t-1) + \frac{1}{K}( x[1] - x[2])$
	} \Else {
		$p(t) = p(t-1) + \frac{1}{K}( x[2] - x[1])$
	}
	$p(t) = \max(0, \min(1, p(t)))$
}	
\caption{The cGA with parameter $K > 0$, maximizing a given function $f : \{0,1\}^n \to \R$.}
\label{alg:cga}
\end{algorithm2e}

\begin{algorithm2e}[ht]%
	$p(0) = \left(\frac{1}{2}, \dots, \frac{1}{2}\right) \in [0,1]^n$ \\
	\For{$t = 1,2, \ldots$}{
	\For{$i = 1,2, \ldots, \lambda$}{
		$x[i] \sim \Sample(p(t-1))$ 
	}
	Sort the individuals into $\tilde{x}[1], \ldots, \tilde{x}[\lambda]$ ordered by worsening fitness\\
	\emph{\%\% Update the frequency} \\
		$p(t) = \frac{1}{\mu} \sum_{i=1}^\mu  \tilde{x}[i]$ 
	}
\caption{The UMDA with parameters  $\lambda \in \Z_{\ge 1}$ and $\mu \in \ [1..\lambda]$.}
\label{alg:umda}
\end{algorithm2e}

\begin{algorithm2e}[ht]%
	$p(0) = \left(\frac{1}{2}, \dots, \frac{1}{2}\right) \in [0,1]^n$ \\
	\For{$t = 1,2, \ldots$}{
	\For{$i = 1,2, \ldots, \lambda$}{
		$x[i] \sim \Sample(p(t-1))$ 
	}
	Find an individual with the best fitness $\tilde{x}[1]$ \\
	\emph{\%\% Update the frequency} \\
		$p(t) = (1 - \rho) p(t-1) + \rho \tilde{x}[1]$ 
	}
\caption{The MMAS with parameters  $\lambda \in \Z_{\ge 1}$ and evaporation factor $\rho \in \ ]0,1]$.}
\label{alg:mmas}
\end{algorithm2e}

\begin{algorithm2e}[ht]%
	$p(0) = \left(\frac{1}{2}, \dots, \frac{1}{2}\right) \in [0,1]^n$ \\
	\For{$t = 1,2, \ldots$}{
	\For{$i = 1,2, \ldots, \lambda$}{
		$x[i] \sim \Sample(p(t-1))$ 
	}
	Sort the individuals into $\tilde{x}[1], \ldots, \tilde{x}[\lambda]$ ordered by their fitness\\
	\emph{\%\% Update the frequency} \\
		$p(t) = (1 - \rho) p(t-1) + \frac{\rho}{\mu} \sum_{i=1}^\mu  \tilde{x}[i]$ 
	
	}
\caption{PBIL with parameters $\rho \in \ ]0,1]$, $\lambda \in \N$ and $\mu \in \ [1..\lambda]$.}
\label{alg:pbil}
\end{algorithm2e}

We immediately see that the general univariate EDA contains the four algorithms above as special cases. We obtain the cGA by taking $\lambda = 2$, $\gamma_0 = 1$, $\gamma_1 = \frac 1K$, and $\gamma_2 = -\frac 1K$. For the UMDA with parameters $\lambda$ and $\mu$, we use the same $\lambda$ and the weights $\gamma_0 = 0$, $\gamma_1 = \dots = \gamma_\mu = \frac 1\mu$ and $\gamma_{\mu+1} = \dots = \gamma_{\lambda} = 0$. The MMAS results from taking $\gamma_0 = 1-\rho$, $\gamma_1 = \rho$, and $\gamma_{2} = \dots = \gamma_{\lambda} = 0$. Finally, PBIL is the general EDA with $\gamma_0 = 1-\rho$, $\gamma_1 = \dots = \gamma_\mu = \frac \rho\mu$, and $\gamma_{\mu+1} = \dots = \gamma_{\lambda} = 0$.

\begin{algorithm2e}%
	$p(0) = \left(\frac{1}{2}, \dots, \frac{1}{2}\right) \in [0,1]^n$ \\
	\For{$t = 1,2, \ldots$}{
	\emph{\%\%Sample the individuals} \\
	\For{$i = 1,2, \ldots, \lambda$}{
		\emph{\%\%Generate the i-th individual $x[i]$} \\
		$x_t[i] \sim \Sample(p(t-1))$ 
	}
	Sort the individuals into $\tilde{x}_t[1], \ldots, \tilde{x}_t[\lambda]$ by worsening fitness\\
	\emph{\%\% Update the frequency} \\
		$p(t) = \max(0, \min(1, \gamma_0 p(t-1) + \sum_{i=1}^\lambda \gamma_i \tilde{x}[i]))$ 
	
	}
\caption{Our general EDA algorithm defined by $(\gamma_i)_{i = 0,\ldots, n}$ such that $\sum_{i=0}^\lambda \gamma_i = 1$.}
\label{alg:psm}
\end{algorithm2e}

\section{Genetic Drift}

Genetic drift is the phenomenon that the sampling frequencies of the probabilistic model move in some direction not because of the feedback from the fitness, but by an unfortunate accumulation of the small random movements that occur when there is no clear signal from the fitness. Genetic drift is problematic in that it can move frequencies close to the boundary values $0$ and $1$, where they tend to stay longer. This phenomenon and its drawbacks were first discussed in the series of works~\cite{Shapiro02,Shapiro05,Shapiro06}. After a long sequence of fundamental results such as~\cite{DangLN19,Droste06,FriedrichKK16,KrejcaW20,LenglerSW21,SudholtW19,Witt19,DoerrZ20tec}, mostly runtime analyses which only apply to a regime with low genetic drift, we now understand this phenomenon quite well. For reasons of completeness, we note that EDAs can also be successful in regimes with genetic drift, see, e.g., the runtimes results~\cite{DangLN19,Witt19} for the UMDA on \onemax and \leadingones when the population size is logarithmic, but the general understanding is that genetic drift is dangerous and examples like the analyses of the UMDA on the DLB problem~\cite{LehreN19foga,DoerrK21ecj} show that genetic drift can lead to drastic performance losses.

The tightest quantitative statements on genetic drift were given in~\cite{DoerrZ20tec}. They were proven via separate analyses for the cGA and PBIL (which imply the corresponding results for the UMDA and MMAS). With our general model for univariate EDAs, we can now provide a unified analysis for these classic algorithms (and all algorithms that will be defined in the future that fit into this model). 

Genetic drift is usually studied by regarding a neutral bit, that is, a bit that has no influence on the fitness (note that such results imply similar results for bits that are neutral only for a certain time as in the \leadingones benchmark or bits that have a preference for one value as in monotonic functions, see~\cite{DoerrZ20tec}). By symmetry, the expected value of the sampling frequency of a neutral bit is always $\frac 12$ (and in fact, the distribution of this frequency is also symmetric around $\frac 12$). Nevertheless, as discussed above, the random fluctuations stemming from the updates of the probabilistic model will move this frequency towards the boundary values $0$ and $1$, and this is the phenomenon of genetic drift. Genetic drift can be quantified, e.g., via statements on the first time that the frequency leaves some middle ground, e.g., the interval $[\frac 13,\frac 23]$. 

In the remainder of this section, let us assume that the first bit of our objective function $f$ is neutral. Then this bit has no influence on the selection, and consequently for all $i \in [1 .. \lambda]$, we have $\tilde{x}_1[i] \sim \mathcal{B}\left(p_1(t-1)\right)$.
For simplicity, we write $x^i_t = \tilde{x}_1[i], p_t = p_1(t)$
for all $t \geq 0, i \in [1.. \lambda]$. We will also assume that we are not in a totally degenerate case, so there exists $i \in [1.. \lambda]$ such that $\gamma_i \ne 0$. 

\begin{lemma}\label{lemma:martingale}
The sequence $\big(\frac{p_t (1 - p_t)}{( 1 - \sum_{i=1}^\lambda \gamma_i^2)^t}\big)_{t \geq 0}$ with respect to the filtration $(p_t)_{t \geq 0}$ is a martingale.
\end{lemma}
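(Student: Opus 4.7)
My plan is to show the single identity
\[
E[p_t(1-p_t)\mid p_{t-1}] = p_{t-1}(1-p_{t-1}) \Bigl(1-\sum_{i=1}^\lambda \gamma_i^2\Bigr),
\]
from which the martingale claim follows by dividing both sides by $(1-\sum_{i=1}^\lambda \gamma_i^2)^t$ and applying the tower property with respect to the filtration generated by $(p_t)$.

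The first key step is to exploit that bit $1$ is neutral. Since the sorting $\tilde{x}[1],\dots,\tilde{x}[\lambda]$ depends only on fitness values, and the fitness does not see bit $1$, the sorting permutation is a function of the remaining bits only, and is therefore independent of the first-bit values of the raw samples $x[1],\dots,x[\lambda]$. These first-bit values are conditionally i.i.d.\ Bernoulli$(p_{t-1})$, and permuting an i.i.d.\ tuple by an independent permutation preserves the joint distribution. Consequently, $x_t^1,\dots,x_t^\lambda$ are themselves i.i.d.\ Bernoulli$(p_{t-1})$ conditionally on $p_{t-1}$.

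Using $\gamma_0 = 1-\sum_{i=1}^\lambda \gamma_i$, the raw update can be rewritten as $p_t - p_{t-1} = \sum_{i=1}^\lambda \gamma_i (x_t^i - p_{t-1})$. Taking conditional expectations gives $E[p_t\mid p_{t-1}] = p_{t-1}$, so $(p_t)$ is itself a martingale, and the independence from the previous step yields $\Var[p_t\mid p_{t-1}] = p_{t-1}(1-p_{t-1}) \sum_{i=1}^\lambda \gamma_i^2$. The elementary identity $E[X(1-X)] = E[X] - \Var[X] - E[X]^2$, applied conditionally, then delivers the target formula in one line.

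The one subtle point, which I expect to be the main obstacle, is the clipping step $p_t \leftarrow \max(0,\min(1,\cdot))$. Whenever the raw update would leave $[0,1]$, the clipped $p_t(1-p_t)$ equals $0$, which is strictly larger than the un-clipped (negative) value, so the derivation above is only exact along trajectories where clipping never fires. I would handle this either by reading the lemma as a statement about the un-clipped auxiliary process $p_{t-1}+\sum_{i=1}^\lambda \gamma_i(x_t^i-p_{t-1})$, or (in applications) by combining it with an optional-stopping argument at the first exit time of $p_t$ from a sub-interval of $(0,1)$; up to that stopping time, the clipping is inactive and the martingale identity holds verbatim, which suffices for genetic-drift conclusions such as lower bounds on the time to leave $[\tfrac13,\tfrac23]$.
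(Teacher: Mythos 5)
Your proof follows essentially the same route as the paper's: establish $E[p_t \mid p_{t-1}] = p_{t-1}$, compute $\Var[p_t \mid p_{t-1}] = p_{t-1}(1-p_{t-1})\sum_{i=1}^\lambda \gamma_i^2$ from the conditional independence of the sorted first-bit values, and combine via the identity $E[X(1-X)] = E[X](1-E[X]) - \Var[X]$. Your two additional remarks---the explicit justification that the sorted first bits remain i.i.d.\ Bernoulli because the sorting permutation is determined by the other bits (the paper merely asserts this), and the observation that the clipping step $\max(0,\min(1,\cdot))$ is silently ignored and must be handled via an unclipped auxiliary process or optional stopping---are both correct and in fact tighten points the paper's own proof glosses over.
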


We note that this result is quite beautiful because it gives a good insight on the behavior of a neutral bit and no approximation was needed, allowing us to obtain a martingale and not a supermartingale or a submartingale like what is usually the case. 
\ifdefined\arxiv
  The proof of this and all other results can be found in the appendix.
\else 
For reasons of space, the formal proof of this and the other results of this paper had to be omitted in the conference version~\cite{DoerrD22ppsn}. They can be found in the appendix of this preprint.
\fi

Using this result, we can find an upper bound on the expected time for a neutral bit frequency to move away from $1/2$.

\begin{lemma}\label{lemma:hitting}
Let $T_L = \min \{ t \geq 0, p_t \leq 1/3 \ \text{or} \ p_t \geq 2/3\}$ be the first time for a neutral bit to leave $[1/3, 2/3]$. Then $E[T_L] = \mathcal{O}\left( \frac{1}{\sum_{i=1}^\lambda \gamma_i^2} \right)$.
\end{lemma}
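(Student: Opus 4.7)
My plan is to turn Lemma~\ref{lemma:martingale} into a tail bound on $T_L$ via optional stopping, and then integrate it. Write $\sigma := \sum_{i=1}^\lambda \gamma_i^2$ and $r := 1-\sigma$; by the non-degeneracy assumption we have $\sigma > 0$, and for the parameter regimes of interest $r \in (0,1)$. Lemma~\ref{lemma:martingale} says that $M_t := p_t(1-p_t)/r^t$ is a martingale with $M_0 = 1/4$. For an arbitrary truncation horizon $N \in \N$, the stopping time $T_L \wedge N$ is bounded, hence the optional stopping theorem applies trivially and yields
$$
\frac{1}{4} \;=\; E[M_{T_L \wedge N}] \;=\; E\!\left[\frac{p_{T_L \wedge N}(1-p_{T_L \wedge N})}{r^{T_L \wedge N}}\right].
$$

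Next I would convert this identity into tail information by splitting according to whether $T_L$ has already occurred by time $N$. On the event $\{T_L > N\}$, the definition of $T_L$ forces $p_N \in (1/3, 2/3)$, and since $p \mapsto p(1-p)$ is minimized on this closed interval at the endpoints, we get $p_N(1-p_N) \geq 2/9$. The contribution from $\{T_L \leq N\}$ is nonnegative. Using that $r^{-N}$ is deterministic, this yields
$$
\frac{1}{4} \;\geq\; E\!\left[\frac{p_N(1-p_N)}{r^{N}}\mathbf{1}_{T_L > N}\right] \;\geq\; \frac{2}{9}\, r^{-N}\, \Pr[T_L > N],
$$
so $\Pr[T_L > N] \leq \tfrac{9}{8} r^N$ for every $N$. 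Finally, using the standard identity $E[T_L] = \sum_{N \geq 0} \Pr[T_L > N]$ and summing the geometric series gives
$$
E[T_L] \;\leq\; \sum_{N=0}^\infty \tfrac{9}{8} r^N \;=\; \frac{9}{8(1-r)} \;=\; \frac{9}{8\sigma} \;=\; O\!\left(\frac{1}{\sum_{i=1}^\lambda \gamma_i^2}\right),
$$
which is exactly the claim.

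There is no serious obstacle; the main things to watch are that optional stopping is applied to the truncated stopping time $T_L \wedge N$ (bounded, so integrability of the stopped martingale is immediate from $M_t \in [0, 1/(4r^t)]$) rather than to $T_L$ directly, and the degenerate edge case $\sigma = 1$, i.e.\ $r = 0$, which can occur only when the weights put all mass on a single sampled individual (for instance MMAS with $\rho = 1$ or UMDA with $\mu = 1$); in that case the first update already drives $p_1$ into $\{0,1\}$, so $T_L = 1$ deterministically and the bound $E[T_L] = O(1/\sigma)$ holds trivially. Apart from this, the only substantive quantitative input is the elementary estimate $p(1-p) \geq 2/9$ on $[1/3,2/3]$, which is precisely what lets the geometric decay rate $r$ of the martingale translate into the geometric tail decay of $T_L$.
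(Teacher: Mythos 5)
Your proof is correct and follows essentially the same route as the paper's: both use Lemma~\ref{lemma:martingale} to get $E[p_t(1-p_t)] = \tfrac14(1-\sum_i\gamma_i^2)^t$, observe that on $\{T_L > t\}$ the quantity $p_t(1-p_t)$ is bounded below by a constant (you use $2/9$ directly, the paper detours through $d_t=\min(p_t,1-p_t)$ and Markov's inequality, which is the same estimate), and sum the resulting geometric tail bound. The optional-stopping framing is cosmetic here since you discard the contribution of $\{T_L \le N\}$ anyway; your constant $\tfrac{9}{8}$ is in fact slightly better than the paper's $\tfrac32$.
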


To obtain a lower bound and more precise concentration results, we can use a Hoeffding inequality in a way similar, but more general than what was done in~\cite{DoerrZ20tec}.

\begin{lemma}\label{lemma:concentration}
For all $T \in \N$ and $\delta > 0$, we have
\begin{align*}
P\left[ \forall t \in [0..T], |p_t - 1/2| < \delta \right] \geq 1 - 2\exp\left( \frac{- \delta^2}{2 T \sum_{i=1}^\lambda \gamma_i^2} \right).
\end{align*}
\end{lemma}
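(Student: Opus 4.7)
The plan is to view $p_t-\tfrac12$ as, in essence, a martingale with conditionally sub-Gaussian increments and to invoke the maximal version of the Azuma--Hoeffding inequality. As in the discussion leading to Lemma~\ref{lemma:martingale}, the neutrality of the first bit makes $(x^1_t,\dots,x^\lambda_t)$ conditionally iid $\mathcal{B}(p_{t-1})$ given $p_{t-1}$, so that $E[\gamma_0 p_{t-1}+\sum_{i=1}^\lambda\gamma_i x^i_t\mid p_{t-1}]=p_{t-1}$ thanks to $\gamma_0+\sum_{i=1}^\lambda\gamma_i=1$. I would then introduce the centered increment $D_t:=\sum_{i=1}^\lambda\gamma_i(x^i_t-p_{t-1})$ and the auxiliary random walk $M_t:=\tfrac12+\sum_{s=1}^t D_s$, which is a genuine martingale with $M_0=p_0=\tfrac12$.

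To reduce the claim to a concentration bound on $M_t$, I would use the following coupling. Let $\tau:=\inf\{t\geq 0:|p_t-\tfrac12|\geq\delta\}$. For every $t<\tau$ we have $|p_{t-1}-\tfrac12|<\delta<\tfrac12$, and two cases arise at the next step: either the uncapped value $p_{t-1}+D_t$ lies in $[0,1]$ and no capping occurs, whence $p_t=p_{t-1}+D_t$; or $p_{t-1}+D_t\notin[0,1]$, in which case capping forces $p_t\in\{0,1\}$ and the exit already happens at $t=\tau$. A straightforward induction then shows $p_t=M_t$ for every $t<\tau$, and the identity $M_\tau=M_{\tau-1}+D_\tau=p_{\tau-1}+D_\tau$ combined with $|p_{\tau-1}+D_\tau-\tfrac12|\geq\delta$ (immediate in the first case; a consequence of $p_{\tau-1}+D_\tau\notin[0,1]$ and $\delta<\tfrac12$ in the second) gives $|M_\tau-\tfrac12|\geq\delta$. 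Thus $\{\tau\leq T\}\subseteq\{\max_{0\leq t\leq T}|M_t-\tfrac12|\geq\delta\}$, and it suffices to bound the latter.

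Conditionally on $p_{t-1}$, each term $\gamma_i(x^i_t-p_{t-1})$ is centered and supported in $[-|\gamma_i|,|\gamma_i|]$, so Hoeffding's lemma together with the conditional independence of the $x^i_t$ yields
\[
E\!\left[e^{\theta D_t}\,\big|\,p_{t-1}\right]\leq\exp\!\left(\tfrac{\theta^2}{2}\sum_{i=1}^\lambda\gamma_i^2\right)\qquad\text{for every }\theta\in\R.
\]
Hence $Z^\theta_t:=\exp\!\bigl(\theta(M_t-\tfrac12)-\tfrac{t\theta^2}{2}\sum_{i=1}^\lambda\gamma_i^2\bigr)$ is a non-negative supermartingale with $E[Z^\theta_0]=1$. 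Doob's maximal inequality followed by the choice $\theta=\delta/\bigl(T\sum_{i=1}^\lambda\gamma_i^2\bigr)$ gives $P[\max_{0\leq t\leq T}(M_t-\tfrac12)\geq\delta]\leq\exp\bigl(-\delta^2/(2T\sum_{i=1}^\lambda\gamma_i^2)\bigr)$; the lower tail follows from the same argument with $\theta<0$, and a union bound together with the inclusion from the previous paragraph completes the proof. The one technical subtlety worth flagging is the capping in the update; the coupling above is designed precisely to side-step it, trading the non-martingale $p_t$ for the genuine martingale $M_t$ at the cost of only overestimating the exit probability.
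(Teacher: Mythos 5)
Your proof is correct and rests on the same core idea as the paper's -- a maximal Azuma--Hoeffding-type bound for the martingale of frequency values -- but the technical execution differs in two ways worth noting. The paper subdivides each iteration into $\lambda$ sub-steps, defining the interpolating martingale $q_{\lambda t+k}=p_t\bigl(1-\sum_{i=1}^k\gamma_i\bigr)+\sum_{i=1}^k\gamma_i x_{t+1}^i$ whose consecutive differences are bounded by $|\gamma_k|$, and then directly cites McDiarmid's maximal Hoeffding--Azuma inequality; you instead keep the original time index, bound the conditional moment generating function of the aggregate increment $D_t$ via Hoeffding's lemma and conditional independence, and re-derive the maximal inequality through the exponential supermartingale $Z^\theta_t$ and Doob's inequality. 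The two routes are equivalent in strength here (your deliberately loose support $[-|\gamma_i|,|\gamma_i|]$ in Hoeffding's lemma lands exactly on the factor $\tfrac{\theta^2}{2}\gamma_i^2$ that the bounded-difference version gives, so the constants match), and each buys something: the paper's subdivision is shorter because the heavy lifting is outsourced to a cited theorem, while your MGF route is self-contained and would adapt more easily if the increments were controlled by variance rather than by range. A genuine added value of your write-up is the stopping-time coupling between the capped process $p_t$ and the uncapped walk $M_t$: the paper silently identifies $p_{t+1}$ with the uncapped update when asserting that the two definitions of $q_{\lambda(t+1)}$ coincide, whereas you show explicitly that any capping event forces the exit $|p_t-\tfrac12|\geq\delta$ at that very step (for $\delta\leq\tfrac12$; the case $\delta>\tfrac12$ being vacuous), so that the substitution only overestimates the exit probability.
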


%
%
With $T_0 = \frac{\left(\sum_{i=1}^\lambda \gamma_i^2 \right)^{-1}}{4 \cdot 36 \log n}$ and a union bound, we obtain the following guarantee that neutral frequencies stay away from the boundaries.
\begin{corollary}
Assuming that all bits are independent and neutral, with high probability, before iteration $T_0$, all bits frequencies stay within the range $[1/3, 2/3]$.
\end{corollary}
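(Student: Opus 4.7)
The plan is to apply Lemma~\ref{lemma:concentration} separately to each of the $n$ coordinates and then take a union bound. Since all bits are neutral, the fitness value is the same on every offspring, so the sort $\tilde{x}$ is a uniformly random permutation of the offspring that is independent of the sampled bit values. Combined with the bitwise independence built into~\eqref{eq:sample}, this means the $n$ frequency processes $(p_{j,t})_{t \geq 0}$ for $j \in [1..n]$ evolve as mutually independent copies of the one-dimensional neutral-bit chain studied in Lemmas~\ref{lemma:martingale}--\ref{lemma:concentration}, so Lemma~\ref{lemma:concentration} indeed applies coordinate-wise.

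For a fixed coordinate $j$, I would invoke Lemma~\ref{lemma:concentration} with $\delta = 1/6$ (so that $|p_{j,t} - 1/2| < \delta$ is exactly the event $p_{j,t} \in (1/3, 2/3)$) and $T = T_0$. Plugging in $T_0 = (144 \log n \cdot \sum_{i=1}^\lambda \gamma_i^2)^{-1}$, the exponent in the tail bound simplifies to
\[
\frac{\delta^2}{2 T_0 \sum_{i=1}^\lambda \gamma_i^2} \;=\; \frac{(1/6)^2 \cdot 144 \log n}{2} \;=\; 2 \log n,
\]
so the failure probability for this coordinate is at most $2 \exp(-2 \log n) = 2/n^2$. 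A union bound over the $n$ coordinates then yields that, with probability at least $1 - 2/n = 1 - o(1)$, no frequency leaves the interval $[1/3, 2/3]$ during the first $T_0$ iterations, which is the claimed statement.

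Since every ingredient is already packaged in the excerpt, I do not foresee a real obstacle. The only points requiring a line of care are verifying the independence of the $n$ coordinate processes under the random tie-breaking (it is essential here that \emph{all} bits are assumed neutral, which makes the sorting completely random and hence independent of the bits) and checking that the constants $4 \cdot 36 = 144$ in the definition of $T_0$ are tuned so that the exponent evaluates to precisely $2 \log n$; that tuning, together with the factor~$2$ in front of the exponential in Lemma~\ref{lemma:concentration}, is what brings the per-coordinate failure to $O(n^{-2})$, i.e., strong enough to survive the union bound over $n$ coordinates.
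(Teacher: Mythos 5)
Your proposal is correct and matches the paper's (implicit) argument exactly: the paper derives the corollary by applying Lemma~\ref{lemma:concentration} with $\delta = 1/6$ and $T = T_0$, noting that the choice $T_0 = \bigl(144 \log n \sum_{i=1}^\lambda \gamma_i^2\bigr)^{-1}$ makes the per-bit failure probability $2e^{-2\log n} = 2/n^2$, and finishing with a union bound over the $n$ bits. Your arithmetic for the exponent is right, and the independence discussion, while not strictly needed for the union bound, is harmless.
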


As in~\cite[part VI]{DoerrZ20tec}, this result can be extended to bits with a preference. 
For a fitness function $f$, we say that it is \textit{weakly preferring} $1$ in bit $i$ if for all $(x_1, \dots, x_{i-1}, x_{i+1},\dots, x_n) \in \{0,1\}^{n-1}$ we have
\begin{align*}
f(x_1, \dots, x_{i-1},1, x_{i+1},\dots, x_n) \geq f(x_1, \dots, x_{i-1},0, x_{i+1},\dots, x_n).
\end{align*}
Many common fitness functions like \onemax or \leadingones are \textit{weakly preferring} $1$ in any bit.

\begin{corollary}
 If the fitness function is \textit{weakly preferring} a $1$ on all of its bits, then we have 
$P\left[\forall i \in [1..n], \forall t \in [0..T_0], p_i^t \geq 1/3 \right]  = 1 - o(1)$.
\end{corollary}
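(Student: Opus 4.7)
The plan is to perform a Doob-style decomposition of $p_i^t$ and invoke a one-sided Hoeffding estimate on its martingale part, reducing matters to the same concentration used in the previous corollary. Fix a bit $i \in [1..n]$ at which $f$ weakly prefers $1$, and set
\begin{align*}
M_t &:= p_i^0 + \sum_{s=0}^{t-1} \bigl(p_i^{s+1} - E[p_i^{s+1} \mid \mathcal{F}_s]\bigr),\\
A_t &:= \sum_{s=0}^{t-1} \bigl(E[p_i^{s+1} \mid \mathcal{F}_s] - p_i^s\bigr),
\end{align*}
so that $p_i^t = M_t + A_t$, $(M_t)$ is a martingale with $M_0 = 1/2$, and $A_0 = 0$.

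The crux is to verify $A_t \geq 0$, i.e., $E[p_i^{s+1} \mid \mathcal{F}_s] \geq p_i^s$. Let $q_j := P[(\tilde x[j])_i = 1 \mid \mathcal{F}_s]$. Using $\sum_{j=0}^\lambda \gamma_j = 1$ one directly computes $E[p_i^{s+1} - p_i^s \mid \mathcal{F}_s] = \sum_{j=1}^\lambda \gamma_j (q_j - p_i^s)$, and permutation-invariance of the sum over ranks yields $\sum_{j=1}^\lambda (q_j - p_i^s) = 0$. Under the natural monotonicity of the selection weights $\gamma_1 \geq \gamma_2 \geq \dots \geq \gamma_\lambda$ (which holds for the cGA, UMDA, MMAS and PBIL) together with $q_1 \geq q_2 \geq \dots \geq q_\lambda$ (higher-ranked samples are more likely to carry a $1$ at a weakly preferred bit), Chebyshev's sum inequality gives $\sum_j \gamma_j (q_j - p_i^s) \geq 0$; hence $A_t \geq 0$ and $p_i^t \geq M_t$ almost surely.

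A Hoeffding estimate of the same form as that underlying Lemma~\ref{lemma:concentration}, applied to the martingale $M_t$ (whose differences are the centered raw increments and so have the same conditional range behavior), then gives, for $T_0 = (144 \log n \cdot \sum_j \gamma_j^2)^{-1}$ and $\delta = 1/6$,
\begin{align*}
P[\exists t \leq T_0 : p_i^t < 1/3] \leq P[\exists t \leq T_0 : M_t < 1/3] \leq 2 \exp(-2 \log n) = 2/n^2.
\end{align*}
A union bound over $i \in [1..n]$ yields the claimed probability of $1 - 2/n = 1 - o(1)$.

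The main obstacle is the monotonicity claim $q_1 \geq q_2 \geq \dots \geq q_\lambda$: intuitively, under weak preference, the probability that the $j$-th best $f$-sample carries a $1$ in bit $i$ is non-increasing in $j$, but formalizing this is delicate because the $f$-ranking depends jointly on all bits. The standard route is a pairwise exchange argument over pairs of samples differing in bit $i$, exactly as carried out in~\cite[part~VI]{DoerrZ20tec}. Two minor subtleties also deserve a remark: (a) the $[0,1]$-truncation in Algorithm~\ref{alg:psm} is inactive for all four standard EDAs (their pre-truncation updates always lie in $[0,1]$), so it does not interfere with the submartingale inequality; and (b) the implicit monotonicity $\gamma_1 \geq \dots \geq \gamma_\lambda$ is needed for the Chebyshev step but is not explicit in the statement of the corollary.
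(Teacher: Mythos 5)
Your overall architecture---Doob decomposition $p_i^t = M_t + A_t$, non-negativity of the compensator $A_t$, one-sided Azuma bound on $M_t$, union bound over bits---is sound and is actually more self-contained than what the paper does: the paper gives no proof of this corollary and simply invokes the stochastic-domination argument of~\cite[part VI]{DoerrZ20tec} to reduce the weakly-preferring case to the neutral one. Your deferral of the key monotonicity fact to that same reference is therefore not a deviation from the paper; note, however, that you do not need the pointwise monotonicity $q_1 \geq \dots \geq q_\lambda$ (which is the delicate part you worry about): the partial-sum domination $\sum_{j \leq \mu} q_j \geq \mu\, p_i^s$ for all $\mu$, which is what the cited exchange argument actually delivers, combined with Abel summation and decreasing $\gamma_j$, already gives $\sum_j \gamma_j (q_j - p_i^s) \geq 0$. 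Your remark (b) is a genuine and correct catch: some monotonicity of the $\gamma_j$ is indispensable, since for, say, $\lambda = 2$, $\gamma_0=1$, $\gamma_1 = -1/K$, $\gamma_2 = +1/K$ the drift at a weakly preferred bit points toward $0$ and the corollary fails; so the statement cannot hold for the fully general model of Algorithm~\ref{alg:psm} and implicitly assumes $\gamma_1 \geq \dots \geq \gamma_\lambda$ (satisfied by all four classic instantiations).

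The one genuine gap is the Azuma step. The increments of your coarse martingale are $M_{s+1}-M_s = \sum_j \gamma_j\bigl(\tilde x_i[j] - q_j\bigr)$, which are only bounded by $\sum_j |\gamma_j|$; feeding that into Lemma~\ref{lemma:hoeffding} yields $\exp\bigl(-\delta^2/(2T(\sum_j|\gamma_j|)^2)\bigr)$, not the claimed $\exp\bigl(-\delta^2/(2T\sum_j\gamma_j^2)\bigr)$, and the two exponents can differ by a factor of $\lambda$. The assertion that the centered raw increments ``have the same conditional range behavior'' as in Lemma~\ref{lemma:concentration} is therefore not correct as stated: the proof of Lemma~\ref{lemma:concentration} obtains the $\sum_j \gamma_j^2$ denominator only by passing to the interpolating process $q_{\lambda t + k}$ that adds one sample at a time, so that each step is bounded by the single $|\gamma_k|$. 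To repair your argument you must perform the Doob decomposition at that finer time scale, i.e., center each per-sample increment $\gamma_k \tilde x_i[k]$ by $\gamma_k q_k$ individually; the martingale part then has steps bounded by $|\gamma_k|$ and the compensator still accumulates $\sum_k \gamma_k(q_k - p_i^s) \geq 0$ per iteration. This is a routine fix, but as written the concentration inequality you invoke does not follow from the lemma you cite.
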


\section{Optimizing the $(\gamma_i)_i$}\label{part:gamma_rep}

A second advantage of our general formulation of univariate EDAs, besides giving unified proofs, could be that this broad class of algorithms contains EDAs that are superior to the four special cases that have been regarded in the past. To help finding such algorithms, we now discuss the influence on the $\gamma_i$ on the optimization progress. Since different $\gamma_i$ might be profitable in different stages of the optimization progress, we analyze their effect in a single iteration, that is, we condition on the current frequency vector. To ease the notation, let us call this frequency vector $p$ (without any time index). Let $\tilde x[1], \dots, \tilde x[\lambda]$ denote the $\lambda$ samples taking in this iteration, sorted already by decreasing fitness. Then, ignoring the influence of frequency boundaries, the next frequency vector $p'$ satisfies $p' = \gamma_0 p + \sum_{i=1}^\lambda \gamma_i \tilde x[i]$.

We would like to have an idea of what the optimal $(\gamma_i)$ with respect to minimizing the expected convergence time to reach the optimal solution would look like. To do so, we look during a single iteration for the \onemax function at the best distribution of $(\gamma_i)$ while keeping the genetic drift minimal. During iteration $t$, let $X(t)$ be a random variable following distribution $(p_i(t))_i$, we want to maximize $E[f(X(t+1))]$ knowing the previous distribution. \onemax being linear, using the linearity of expectation on all the different bits, we have
\begin{align*}
E[f(X(t+1))] &= \gamma_0 E[f(X(t))] + \sum_{i=1}^\lambda \gamma_i E[f(\tilde{x}[i])] \\
&= \left(1 - \sum_{i=1}^\lambda \gamma_i \right) E[f(X(t))] + \sum_{i=1}^\lambda \gamma_i E[f(\tilde{x}[i])] \\
& = E[f(X(t))] + \sum_{i=1}^\lambda \gamma_i \left(E[f(\tilde{x}[i])) - E[f(X(t))] \right).
\end{align*}
Let us assume that $(\tilde{\gamma}_i)_i$ are optimal for the current iteration and let $\delta = \sum_{i=1}^\lambda \tilde{\gamma}_i^2$ be the genetic drift. Because this iteration maximizes the expected outcome of the next distribution while minimizing the genetic drift, it is a solution to
\begin{align*}
\text{Maximize: } & E[f(X(t))] + \sum_{i=1}^\lambda \gamma_i \left(E[f(\tilde{x}[i])] - E[f(X(t))]\right) \\
\text{Subject to: } & \sum_{i=1}^\lambda \gamma_i^2 \leq \delta
\end{align*}
Both the function to optimize and the constraint are polynomial so differentiable. Moreover the set solution to the constraint is bounded and closed, so it is compact. Therefore an optimal solution exists and we can use the method of Lagrange multipliers to find it: there exists a Lagrange multiplier $\alpha \leq 0$ such that
\begin{align*}
\begin{bmatrix}
E[f(\tilde{x}[1])] - E[f(X(t))] \\
E[f(\tilde{x}[2])] - E[f(X(t))] \\
\dots \\
E[f(\tilde{x}[\lambda])] - E[f(X(t))] \\
\end{bmatrix} + \alpha \begin{bmatrix}
2 \tilde{\gamma}_1 \\
2 \tilde{\gamma}_2 \\
\dots \\
2 \tilde{\gamma}_\lambda
\end{bmatrix} = 0.
\end{align*}
So $(\tilde{\gamma}_i)_i$ are proportional to $\left(E[f(\tilde{x}[i])] - E[f(X(t))]\right)_i$. Because $(\tilde{x}[i])$ are sorted according to their fitness, $\left(E[f(\tilde{x}[i])]\right)_i$ is decreasing so $(\tilde{\gamma}_i)_i$ should also be decreasing.

\section{Designing New Univariate EDAs}

In this section, we propose two new univariate EDAs (that is, EDAs within our framework with $\gamma_i$ that do not lead to one of the four classical algorithms) and analyze them via experimental means. Given the momentary state of the art in mathematical runtime analysis of EDAs, it seems out of reach to conduct a mathematical runtime analysis precise enough to make visible the influence of the $\gamma_i$ on the runtime. The main insight derived from this part of our work is that with not much effort, one can find univariate EDAs which outperform the classic univariate EDAs. We conduct this line of research for the two classic benchmarks \onemax and \leadingones.

\textbf{\onemax:}
Since univariate EDAs sample the bits independently and since in the \onemax benchmark each bit contributes the same to the fitness, we expect a somewhat regular behavior in a set of independent samples: Those with best fitness will have many bits set correctly, those with lowest fitness with miss many bit values. This, together with the considerations of the previous section, suggests to give more weights to better samples in the frequency update, and to do this in a somewhat continuous manner. One way of doing so is taking 
\begin{align}
\gamma_0 = 1 - \beta \sum_{i = 1}^\lambda (1 - \tfrac{i}{\lambda / 2}) \approx 1 \mbox{ and }
\gamma_i = \beta (1 - \tfrac{i}{\lambda / 2}) \mbox{ for $i \in [1 .. \lambda]$},\label{eq:gamma}
\end{align}
where $\beta$ is a positive number still to be determined. While not perfectly symmetric, essentially here $\tilde x[i]$ and $\tilde x[\lambda-i]$ have weights of opposite sign, hence $\gamma_0$ is essentially one. 

We compare this new EDA with the two classic ones UMDA and cGA with optimized parameters. We do not regard the other two classic EDAs since with their learning rate $\rho$ they are structurally quite different and it is less understood what are good parameter settings for these. We note that there is no indication in the literature that the MMAS or PBIL with their slightly cautious learning mechanism could outperform the other two algorithms on a simple unimodal benchmark such as \onemax.

For the UMDA and cGA, we determine good parameter values as follows. For the UMDA, we chose to fix $\lambda$ as $\lfloor \log n \sqrt{n} \rfloor$ since both theoretical and experimental results show that this leads to good performances \cite{Witt19}. We use the same value of $\lambda$ for our EDA. Still for the UMDA, we set $\mu = \lfloor \lambda / 3 \rfloor$ as this gave the best expected runtimes in the experiments we conducted to opitmize the parameters of the UMDA. For cGA, the only parameters that needs to be determined is the hypthetical population size~$K$. From~\cite[Figure~1]{DoerrZ20gecco}, we know that the expected runtime of the cGA on \onemax is roughly a unimodal function in~$K$.\footnote{We know that~\cite{LenglerSW21} proved that the runtime of the cGA on \onemax is not unimodal in $K$ when $n$ is large enough, but apparently this asymptotic results becomes relevant only for very large population sizes.} Since $\beta$ in our algorithm plays a similar role as $K$ in the cGA (namely it regulates the strength of the model update), we expect a similar unimodal dependence on $\beta$ for our algorithms, which we confirm in experiments. For that reason, for each problem size $n$ we determined the optimal values for $K$ and $\beta$ via ternary search.

Figure~\ref{fig:onemax} displays the average (in 200 runs) runtime of these three algorithms for different problems sizes. These results show that our general algorithm with a gamma distribution that was not used in previous algorithms is about twice as fast as the optimized UMDA and cGA. This suggest that it is not too difficult to find in our broad class of univariate EDAs new algorithms which are significantly faster than the classic algorithms.

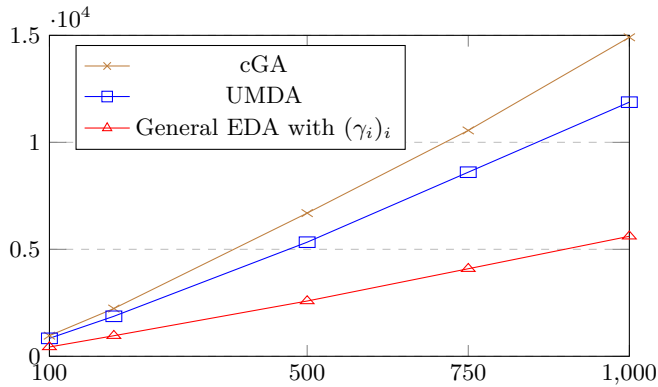
\begin{figure}[t]\centering
\begin{tikzpicture}
\begin{axis}[
    xmin=100, xmax=1000,
    ymin=0, ymax=15000,
    xtick={100,500,750,1000},
    ytick={0,5000, 10000, 15000},
    legend pos=north west,
    ymajorgrids=true,
    grid style=dashed,
    xscale=1.5,
		scale=0.75
]

\addplot[
    color=brown,
    mark=x,
    ]
    coordinates {
    (100, 964) (200, 2232) (500, 6692) (750, 10556) (1000, 14908)
    };
    \addlegendentry{cGA}

\addplot[
    color=blue,
    mark=square,
    ]
    coordinates {
    (100, 839) (200, 1870) (500, 5333) (750, 8604) (1000, 11877)
    };
    \addlegendentry{UMDA}
    
\addplot[
    color=red,
    mark=triangle,
    ]
    coordinates {
    (100, 435) (200, 961) (500, 2583) (750, 4092) (1000, 5609)
    };
    \addlegendentry{General EDA with $(\gamma_i)_i$}
    
\end{axis}
\end{tikzpicture}
\caption{Average running times (in fitness evaluations) of cGA (with optimized value of $K$), UMDA (with fixed $\lambda = \lfloor \log n \sqrt{n} \rfloor$ and optimized value $\mu = \lambda/3$), and our general algorithm with fixed gamma as in~\eqref{eq:gamma} and $\beta$ optimized, on the \onemax benchmark with problem size between $n=100$ and $n=1000$.}
\label{fig:onemax}
\end{figure}

\textbf{\leadingones:}
We undertook a similar work for the \leadingones benchmark. In this function, the bits do not contribute independently to the fitness, so our considerations valid in the design of the EDA above are not valid anymore. More detailedly, search points with low fitness reveal very little information how good solutions look like. For this reason, we design our new EDA in the way that such solutions are not taken into account for the model update. Without any optimizing, we set the cutoff for this regime at $\lambda/3$, that is, we have $\tilde \gamma_i = 0$ for all $i > \lambda / 3$. For the remaining samples, we expect some positive information towards the optimum, and again we expect this to be stronger for better solutions, so we take $\tilde \gamma_i$ proportional to $\lfloor \lambda / 3 \rfloor - (i-1)$. With no particular reason, we decided to define an EDA resembling the UMDA, that is, we take $\tilde \gamma_0 = 0$ and 
\begin{align}
\tilde{\gamma}_i = \frac{\lfloor \lambda / 3 \rfloor - (i-1)}{\sum_{j=1}^{\lfloor \lambda / 3 \rfloor}  \lfloor \lambda / 3 \rfloor - (j-1)}
\end{align}
for all $i \in [1..\lambda/3]$.

In Figure~\ref{fig:leadingones}, we experimentally compare the EDA just designed, the EDA designed in the previous subsection, and the UMDA with parameters optimized (for \leadingones) as described in the previous subsection. As expected, the running time of our general algorithm with the $(\gamma_i)_i$ chosen in the previous subsection is not very good (roughly by 25\% worse that the UMDA). The EDA just designed, however, beats the UMDA with optimized parameters by roughly 20\%. This again shows that with moderately effort, one can find superior EDAs in the class of univariate EDAs defined in this work. 

We admit that the \onemax and \leadingones benchmarks are well-understood, so designing a better univariate EDA for a complicated real-world problem will require more work. Nevertheless, we are optimistic that using intuitive ideas such as the ones above, e.g., a continuous dependence of the $\gamma_i$ on the rank~$i$, together with some trial-and-error experimentation can lead to good EDAs (better than the classic ones) also for more complex problems.
%
%
    %
\begin{figure}[t]\centering
\begin{tikzpicture}
\begin{axis}[
    xmin=50, xmax=500,
    ymin=0, ymax=410000,
    xtick={50, 100,200,300,400, 500},
    ytick={0,100000, 200000, 300000, 400000},
    legend pos=north west,
    ymajorgrids=true,
    grid style=dashed,
    xscale=1.5,
		scale=0.75
]

\addplot[
    color=red,
    mark=triangle,
    ]
    coordinates {
    (50, 3500) (100, 15088) (200, 60088) (300, 139160) (400, 247320) (500, 403100)
    };
    \addlegendentry{General EDA with $(\gamma_i)_i$}
    
\addplot[
    color=blue,
    mark=square,
    ]
    coordinates {
    (50, 2940) (100, 12696) (200, 51060) (300, 118580) (400, 204840) (500, 319839)
    };
    \addlegendentry{UMDA}

\addplot[
    color=purple,
    mark=o,
    ]
    coordinates {
    (50, 2688) (100, 10212) (200, 39442) (300, 89964) (400, 158760) (500, 248532)
    };
    \addlegendentry{General EDA with $(\tilde{\gamma}_i)_i$}
    
\end{axis}
\end{tikzpicture}
\caption{Average running times (in fitness evaluations) over 200 runs of the classic UMDA (with optimized parameters) and the two EDAs designed in this section, on \leadingones with problem size between $n=50$ and $n=500$. The $\tilde\gamma_i$ chosen with consideration of elementary properties of \leadingones clearly outperform the other two algorithms.}
\label{fig:leadingones}
\end{figure}
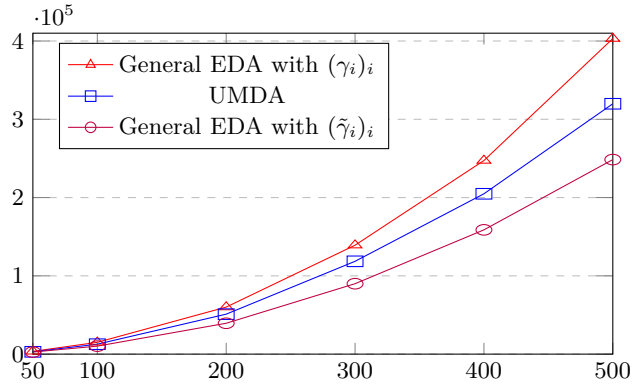

\section{Conclusion}

In this work, we proposed a general formulation of a univariate EDA. It captures the three main univariate EDAs and the MMAS ant colony optimizer with iteration-best update. Our formulation allows to phrase proofs, so far conducted individually for the different algorithms, in a unified manner. We demonstrate this for a recent quantitative analysis of genetic drift. We are optimistic that our formulation also allows to conduct some of the existing runtime analyses in a unified manner. This would be particularly interesting as here many results have been shown only for some of the classic algorithms, e.g., the runtime analyses on the \onemax and \jump benchmarks as well as the results on noisy optimization. However, given the high complexity of the existing analyses for particular algorithms, this might be a challenging task.

Our general formulation also allows to define new univariate EDAs, which might turn out to be superior to the existing ones. With intuitive arguments, we define such EDAs and show experimentally that they beat existing EDAs for the \onemax and \leadingones benchmarks. 
We are optimistic that this approach can be profitable also for other optimization problems.

\subsection*{Acknowledgment}

This work was supported by a public grant as part of the
Investissements d'avenir project, reference ANR-11-LABX-0056-LMH,
LabEx LMH.

%
%
%



\newpage
\appendix
\section{Appendix}

This appendix contains the mathematical proofs which had to be omitted in the main paper for reasons of space. They are given here only in case a reviewer wants to access them. The paper without this appendix is our submission to PPSN and we feel that it is perfectly accessible without this appendix. Nevertheless, upon acceptance, we will make a version including this appendix available on the arxiv preprint server.

\subsection{Proof of Lemma \ref{lemma:martingale}}
\begin{proof}
It suffices to show that for all $t \geq 0$, we have
\begin{equation}
E[p_{t+1}(1-p_{t+1})\mid p_t] = \left( 1 - \sum_{i=1}^\lambda \gamma_i^2 \right) p_t(1-p_t).
\end{equation}
Let $t \geq 0$. Using the fact that $(p_t)_{t \geq 0}$ is a martingale, we have
\begin{align*}
E[p_{t+1}(1-p_{t+1})\mid p_t] 
& = E[p_{t+1}\mid p_t] - E[p_{t+1}^2\mid p_t] \\
& = E[p_{t+1}\mid p_t] - \left( E[p_{t+1}^2\mid p_t] - E[p_{t+1}^2\mid p_t]^2 + E[p_{t+1}^2\mid p_t]^2 \right) \\
& = E[p_{t+1}\mid p_t](1 - E[p_{t+1}\mid p_t]) - \Var (p_{t+1}\mid p_t) \\
& = p_t( 1 - p_t ) - \Var (p_{t+1}\mid p_t).
\end{align*}
Moreover, conditionally to the value of $p_t$, the $(x_i^t)_{i \in [1 .. \lambda]}$ are independent Bernoulli random variable, each of parameter $p_t$. Therefore
\begin{align*}
 \Var (p_{t+1}\mid p_t) 
 & = \Var \left(\gamma_0 p_t + \sum_{i=1}^\lambda \gamma_i x_i^t \mid p_t\right) \\
 & = \gamma_0^2 \Var(p_t \mid p_t) + \sum_{i=1}^\lambda \gamma_i^2 \Var (x_i^t \mid p_t) \\
 & = 0 + \sum_{i=1}^\lambda \gamma_i^2 p_t (1 - p_t).
\end{align*}
Using the two previous equations, we obtain the desired result
\begin{align*}
E[p_{t+1}(1-p_{t+1})\mid p_t] = \left( 1 - \sum_{i=1}^\lambda \gamma_i^2 \right) p_t(1-p_t).
\end{align*}
\end{proof}
\subsection{Proof of Lemma \ref{lemma:hitting}}
\begin{proof}
For $t \geq 0$, let
\begin{align*}
d_t = \min( p_t, 1-p_t).
\end{align*}
For all $t \geq 0$, we have $d_t \in [0, 1/2]$. Thus $d_t \leq 2 p_t (1-p_t)$ and 
\begin{align*}
E[d_t] \leq 2 E[p_t(1-p_t)].
\end{align*}
Using the previous lemma, because a martingale preserves the expectancy, we have
\begin{align*}
E \left[ \frac{p_t (1 - p_t)}{\left( 1 - \sum_{i=1}^\lambda \gamma_i^2 \right)^t} \right] & = E \left[ \frac{p_0 (1 - p_0)}{\left( 1 - \sum_{i=1}^\lambda \gamma_i^2 \right)^0} \right] \\
& = 1/4.
\end{align*}
Therefore
\begin{align*}
E[d_t] &\leq 2 E[p_t(1-p_t)] \\
& = \frac{1}{2} \left( 1 - \sum_{i=1}^\lambda \gamma_i^2 \right)^t.
\end{align*}
Because we have the event inclusion $\{ d_t < 1/3 \} \subset \{ T_L \leq t \}$ and using a Markov inequality on $T_L$, we obtain
\begin{align*}
P[ T_L \leq t ] & \geq P[ d_t < 1/3 ] \\
& > 1 - 3 E[d_t] \\
& \geq 1 - \frac{3}{2} \left( 1 - \sum_{i=1}^\lambda \gamma_i^2 \right)^t.
\end{align*}
Therefore we have $P[ T_L \geq t + 1 ] \leq \frac{3}{2} \left( 1 - \sum_{i=1}^\lambda \gamma_i^2 \right)^t$. Because $T_L$ takes integer values, we can obtain an upper bound on its expectancy
\begin{align*}
E[T_L] & = \sum_{t=1}^{+ \infty} P[ T_L \geq t ] \\
& \leq \sum_{t=0}^{+ \infty} \frac{3}{2} \left( 1 - \sum_{i=1}^\lambda \gamma_i^2 \right)^t \\
& = \frac{3}{2} \frac{1}{1 - \left( 1 - \sum_{i=1}^\lambda \gamma_i^2 \right)}\\
& = \frac{3}{2 \left(\sum_{i=1}^\lambda \gamma_i^2 \right)} \\
& = \mathcal{O}\left( \frac{1}{\sum_{i=1}^\lambda \gamma_i^2} \right).
\end{align*}
We note that we got this result using simple bounds and theorems. We can also find a slightly stronger result by applying the $\log$ function to the martingale to obtain a supermartingale then using Doob's optional sampling theorem.
\end{proof}

\subsection{Proof of Lemma \ref{lemma:concentration}}

We will use the following Hoeffding-Azuma inequality for maxima \cite[Theorem 3.10 and (41)]{McDiarmid98}. 

\begin{lemma}\label{lemma:hoeffding}
Let $a_1, \dots , a_m \in \R$ and $S_1, \dots , S_{m+1}$ be a martingale with $|S_{k+1} - S_{k}| \leq a_k$ for $k \in [1.. m]$. Then for all $\delta \geq 0$, we have
\begin{align*}
P\left[ \max_{k=1..m+1} \left|S_{k} - S_1\right| \geq \delta \right] \leq 2\exp\left(-\frac{\delta^2}{2 \sum_{i=1}^m a_i^2} \right).
\end{align*}
\end{lemma}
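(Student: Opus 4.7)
The plan is to prove this by the standard Chernoff-type argument for martingales, combined with Doob's maximal inequality to handle the maximum. First I would center the martingale by setting $M_k = S_k - S_1$, so that $(M_k)$ is a martingale with $M_1 = 0$ and bounded increments $|M_{k+1} - M_k| \leq a_k$. Then I would reduce the two-sided bound to two one-sided bounds via a union bound: it suffices to show that $P[\max_k M_k \geq \delta] \leq \exp(-\delta^2/(2\sum a_i^2))$ and the analogous bound for $\max_k (-M_k)$, since $-M_k$ is also a centered martingale with the same increment bounds.

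For the one-sided bound, I would invoke the exponential submartingale method. Fix $\xi > 0$. Since $x \mapsto e^{\xi x}$ is convex and $(M_k)$ is a martingale, Jensen's inequality shows that $(e^{\xi M_k})_{k}$ is a nonnegative submartingale. Doob's maximal inequality for submartingales then yields
\begin{equation*}
P\bigl[\max_{k=1,\dots,m+1} M_k \geq \delta\bigr] = P\bigl[\max_k e^{\xi M_k} \geq e^{\xi \delta}\bigr] \leq e^{-\xi \delta}\, E\bigl[e^{\xi M_{m+1}}\bigr].
\end{equation*}

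To control the final moment generating function, I would peel off one increment at a time using the tower property and Hoeffding's lemma: if $Y$ satisfies $E[Y \mid \mathcal{F}] = 0$ and $|Y| \leq a$ almost surely, then $E[e^{\xi Y} \mid \mathcal{F}] \leq e^{\xi^2 a^2/2}$. Applied recursively to the increments $Y_k = M_{k+1} - M_k$, this gives $E[e^{\xi M_{m+1}}] \leq \exp\bigl(\tfrac{\xi^2}{2}\sum_{i=1}^m a_i^2\bigr)$. Plugging this in and optimizing by choosing $\xi = \delta / \sum_{i=1}^m a_i^2$ yields $P[\max_k M_k \geq \delta] \leq \exp(-\delta^2/(2\sum_{i=1}^m a_i^2))$.

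Applying the same argument to $-M_k$ and using a union bound produces the claimed factor of $2$. The main obstacle, if any, is the proof of Hoeffding's lemma itself (the conditional MGF bound for a centered, bounded random variable), which is established by the standard convexity argument: write $Y$ as a convex combination of the endpoints of its range, bound $e^{\xi Y}$ by the corresponding convex combination, take conditional expectation, and verify that the resulting function of $\xi$ is bounded by $e^{\xi^2 a^2/2}$ via a Taylor expansion of its logarithm. Since the result is quoted from McDiarmid's survey, in the paper I would present only a brief sketch or simply refer to the cited reference.
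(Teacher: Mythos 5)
Your proof sketch is correct. Note, however, that the paper does not prove this lemma at all: it is imported verbatim from McDiarmid's survey (Theorem 3.10 and inequality (41) there), so there is no in-paper argument to compare against. Your outline --- centering to $M_k = S_k - S_1$, splitting the two-sided event into two one-sided ones by a union bound, turning $e^{\xi M_k}$ into a nonnegative submartingale via conditional Jensen, applying Doob's maximal inequality, telescoping the conditional moment generating function with Hoeffding's lemma to get $E[e^{\xi M_{m+1}}] \leq \exp(\tfrac{\xi^2}{2}\sum_{i=1}^m a_i^2)$, and optimizing $\xi = \delta/\sum_{i=1}^m a_i^2$ --- is exactly the standard derivation, and it is the one used in the cited source. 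All the constants check out: Hoeffding's lemma for a centered increment with $|Y|\le a$ gives the factor $e^{\xi^2 a^2/2}$, which after summation and optimization produces the stated exponent $-\delta^2/(2\sum_{i=1}^m a_i^2)$, and the union bound supplies the factor $2$. Your closing remark is the right call: in the paper itself one would simply cite the reference rather than reproduce this argument.
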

\begin{proof}[of lemma \ref{lemma:concentration}]
$(p_t)$ is a martingale, but the difference between two consecutive values ($|p_{t+1} - p_t|$) is too big to reach our result, instead we will construct a new martingale sequence whose values are closer to each other. For $t \geq 0$ and $k \in [0.. \lambda]$, we set
\begin{align*}
q_{\lambda t + k} = p_t \left( 1 - \sum_{i=1}^k \gamma_i \right) + \sum_{i=1}^k \gamma_i x_{t+1}^i.
\end{align*}
Even though $q_{\lambda (t+1)} = q_{\lambda t + \lambda}$ is defined in two different ways, these two definitions yield the same result. For $t \geq 0$, we have $q_{\lambda t} = p_t$. Also for all $t \geq 0, k \in [1.. \lambda ]$, we have
\begin{align*}
|q_{\lambda t + k} - q_{\lambda t + k - 1} | \leq \gamma_k.
\end{align*}
Moreover, $q_0 = 1/2$ and we saw in the previous proof that $(q_i)$ is a martingale, therefore we can apply lemma \ref{lemma:hoeffding} on $(q_i)$. For all $T \in \N$, $\delta \geq 0$
\begin{align*}
P\left[ \exists t \in [0..T], |p_t - 1/2| \geq \delta \right] & \leq P\left[ \max_{i=0..\lambda t} \left|q_{i} - q_0\right| \geq \delta \right] \\
& \leq 2\exp\left(-\frac{\delta^2 }{2 \sum_{k=1}^T \sum_{i=1}^\lambda \gamma_i^2} \right) \\
& = 2\exp\left(-\frac{\delta^2 }{2 T \sum_{i=1}^\lambda \gamma_i^2} \right).
\end{align*}
\end{proof}

}
\end{document}